\documentclass[11pt]{article}
\usepackage[T1]{fontenc}
\usepackage[utf8]{inputenc}
\usepackage[a4paper]{geometry}
\usepackage{amsmath,amssymb,amsthm,graphicx,booktabs,array}
\usepackage{cite}
\usepackage[hidelinks]{hyperref}
\newtheorem{proposition}{Proposition}[section]
\newtheorem{corollary}[proposition]{Corollary}
\title{Learning 3D biophysical cell properties from 2D images and cell-population statistics}
\author{Santiago Hern\'andez-Orozco$^{1}$, Hector Zenil$^{1,2}$\\[2mm]
\small $^{1}$ Algocyte | Oxford Immune Algorithmics, Oxford University Innovation \\ \& 
\small London Institute for Healthcare Engineering, UK\\
\small $^{2}$ Algorithmic Dynamics Lab, Department of Biomedical Computing,\\
\small School of Biomedical Engineering and Imaging Sciences, King's Institute for AI,\\
\small King's College London, UK\\
}
\date{}

\date{}
\begin{document}
\maketitle
\begin{abstract}
Inferring 3D cellular properties from 2D microscopy is difficult when a reference instrument reports only population statistics rather than labels for individual cells. Here we develop a population-supervised framework that maps single 2D red-cell images to latent biophysical quantities and aggregates them to mean corpuscular volume, red-cell distribution width and mean corpuscular haemoglobin. The model combines shared local inference, a biophysically structured decoder for volume and haemoglobin, learned instance weighting and device-specific calibration. We formalise conditions under which aggregate observations identify restricted instance predictors, show why population agreement does not by itself identify single-cell properties or 3D geometry, and derive the dispersion penalty induced by subset mean matching. The development dataset comprises 390 specimens and 1,105 acquisitions across six devices, with reported Pearson correlations of 0.86--0.98 against a Sysmex analyser. The framework provides a testable route from 2D images and population supervision to 3D cellular biophysics without claiming explicit 3D reconstruction.\\

\noindent \textbf{Keywords:} Population-supervised learning, Aggregate supervision, Computational microscopy, 2D-to-3D inference, Cellular biophysics, Red blood cells, Haematology, Mean corpuscular volume (MCV), Red-cell distribution width (RDW), Self-referential attention
\end{abstract}

A microscope records the 2D appearance of individual cells, whereas a laboratory analyser can provide statistical summaries without measurements paired to those same images. The desired predictor therefore faces two coupled inverse problems: inferring a 3D biophysical quantity from a 2D observation, and learning an individual mapping from population-level supervision. Red-cell volume and haemoglobin content provide a concrete setting: specialised physical methods can measure these quantities at the single-cell level\cite{mohandas1986,roussel2018}, and image cytometry supports automated blood-cell analysis\cite{moradi2023}. Here, ``3D'' denotes volumetric biophysical inference rather than explicit 3D image or shape reconstruction; the training problem arises because paired single-cell labels are unavailable for the microscopy system used here.

Our organising principle is \emph{local inference under population-level supervision}. A shared function estimates properties of one object or one local image region; a specified statistical operation connects those estimates to the available measurement. Restricting where prediction takes place is distinct from restricting where supervision is supplied. A sample-level target can influence every local prediction through backpropagation without making the predictor an unrestricted function of the entire sample. This design yields reusable instance outputs and an explicit measurement pathway. Whether it improves generalisation is an empirical question, not a consequence of locality alone.

The resulting architecture connects to approaches that embed equations, invariances and analytic constraints into neural networks\cite{lagaris1998,raissi2019,ling2016,beucler2021}. We study mean corpuscular volume (MCV), red-cell distribution width as a coefficient of variation (RDW-CV), and mean corpuscular haemoglobin (MCH). The local-inference principle supplies the design rationale; the analysis separates conditions that constrain individual predictions from assumptions required to interpret them physically.

\section{Results}
\subsection{Shared local predictions connect population labels to cellular properties}
Let $\mathcal P=\{\chi_j\}_{j=1}^{M}$ be a collection of populations, with each observed bag represented as a multiset $\chi_j=\{x_{ji}\}_{i=1}^{N_j}$. Multiplicity matters because statistics count repeated observations. For unobserved per-element quantities $y_{ji}$, write the ideal aggregate label as $\xi_j=S(\{y_{ji}\}_i)$, where $S$ is permutation-invariant and may be vector-valued. The general training objective is
\begin{equation}\label{eq:aggregate-loss}
\mathcal L(\theta)=\frac{1}{M}\sum_{j=1}^{M}
\ell\!\left(\xi_j,\,S\bigl(\{f_\theta(x_{ji})\}_{i=1}^{N_j}\bigr)\right).
\end{equation}
Here $f_\theta$ is the same map for every instance and population, and $\ell$ is minimised at agreement. In practice, a reference analyser measures a separate aliquot rather than these exact imaged cells. Equation~\eqref{eq:aggregate-loss} then becomes an empirical approximation to a population measurement relation, with sampling and reference error in addition to prediction error.

The shared map couples the optimisation across bags: its parameters cannot be independently adjusted for each population. At inference, a local predictor receives only the object representation and, where applicable, known device metadata, not a bag embedding or specimen identifier. This restriction does not prevent specimen-related cues from occurring inside a crop. It also does not make thousands of crops thousands of independently labelled biological examples. Rather, each bag exposes many evaluations of a common function under a small number of measured constraints. A low-capacity local map is a design hypothesis for using this signal efficiently; it is not a universal guarantee of lower variance than a suitably restricted bag-level model.

For coupled continuous properties, let $v_\theta(x)$ denote volume and $c_\theta(x)$ concentration in compatible units. Define $f_\theta(x)=(v_\theta(x),c_\theta(x)v_\theta(x))$. The corresponding red-cell measurement operator is
\begin{equation}\label{eq:coupled-operator}
S(\{(V_i,H_i)\})=
\left(\mu_V,\,100\frac{\sigma_V}{\mu_V},\,\mu_H\right),
\qquad H_i=C_iV_i,
\end{equation}
where $\mu$ and $\sigma$ denote the mean and standard deviation under the chosen empirical measure. Thus MCV and RDW constrain the same volume outputs, whereas MCH constrains their products with concentration. The concentration and volume branches can share an encoder. Coupling restricts the computation, but a freely adjustable concentration branch can still compensate for incorrect volume estimates; joint supervision is not by itself proof of individual recovery.

\subsection{From 2D cell images to 3D biophysical quantities}
The model takes segmented $100\times100$ RGB cell images. A shared convolutional encoder and four scalar heads produce latent radius $r_i$, height $h_i$, morphological scaling $s_i$ and concentration-like information $\lambda_i$. The structural decoder forms
\begin{equation}\label{eq:decoder}
\widetilde V_i=\pi s_i r_i^2h_i,\qquad
\widetilde H_i=\lambda_i\widetilde V_i.
\end{equation}
A second image network assigns instance weights; device-specific affine maps convert the two raw quantities to the reference scales before weighted aggregation. All trainable components receive supervision through sample-level indices. This is the weighted, calibrated implementation of the general local-to-aggregate pathway, rather than an additional source of individual labels.

The acquired objects vary in focus, apparent morphology and segmentation quality (Fig.~\ref{fig:cells}). Cellpose supplies segmentation\cite{stringer2021,stringer2025}. Learned weighting, termed \emph{self-referential attention}, controls each object's contribution to the final statistics. It is instance-conditioned weighting, not pairwise transformer self-attention; a high score is not assumed to be calibrated confidence or independently verified optical quality.

The cylindrical factorisation is a tractable geometric surrogate for an intrinsically 3D quantity: cell volume. A comparison of erythrocyte models supports a cylinder as an approximation to volume when dimensions are measured, while distinguishing this from approximation of surface area\cite{udroiu2024}. Here the dimensions are latent. The architecture is therefore \emph{biophysically structured}, without asserting that real cells are cylinders, that their true thickness has been recovered, or that a 3D surface has been reconstructed. The main contribution of this organisation is to expose the intermediate quantities and assumptions to inspection.

\begin{figure}[htbp]
\centering
\includegraphics[width=0.76\linewidth]{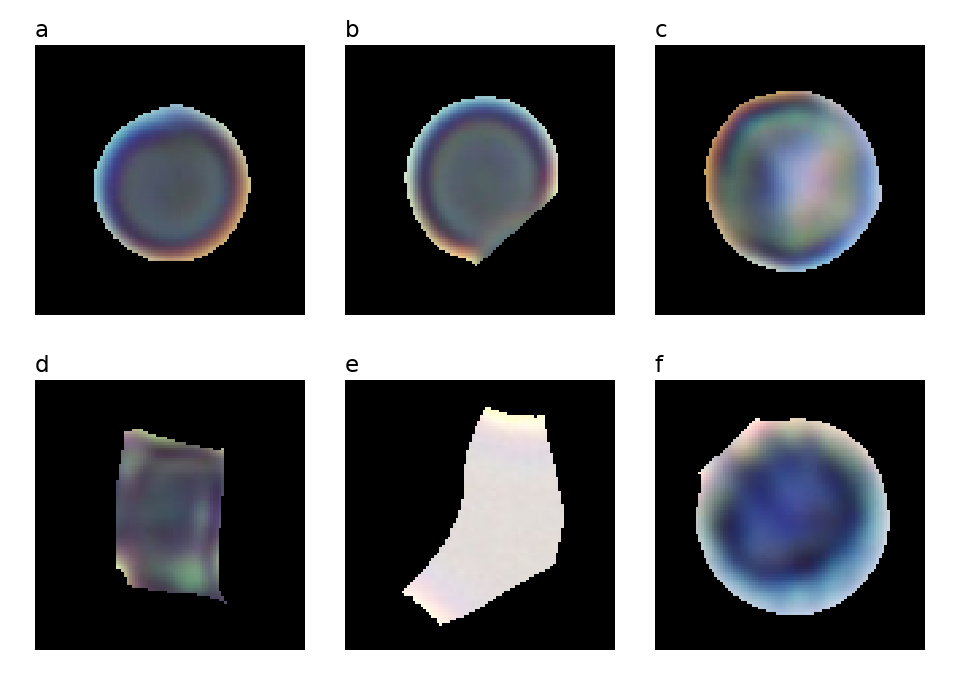}
\caption{Representative $100\times100$ masked RGB input crops illustrating variation in apparent morphology, focus and segmentation. Images are reproduced without intensity adjustment and enlarged by nearest-neighbour replication.}\label{fig:cells}
\end{figure}

\subsection{What population constraints identify}
Correct instance predictions reproduce the ideal target statistics, but the converse requires additional conditions. A positive dispersion target excludes an identical volume for every cell. Nevertheless, fixing the mean and variance of $N\geq3$ unconstrained predictions leaves an $(N-2)$-dimensional sphere of feasible assignments before positivity restrictions. Permuting assignments preserves the entire empirical distribution. Appendix~\ref{app:ambiguity} gives the proof and a synthetic example with identical MCV, RDW and MCH despite a cell-paired volume discrepancy.

The fixed-size overlapping-population argument can be made exact. Let $N$ denote a finite universe of objects, let $A$ be the matrix of normalised bag membership, and let $g$ be a continuous strictly monotone function. For a quasi-arithmetic mean,
\begin{equation}\label{eq:quasi-main}
S_{g,j}(y)=g^{-1}\!\left(\sum_{i=1}^{N}A_{ji}g(y_i)\right).
\end{equation}
Matching genuinely observed bag aggregates is equivalent to $A\widehat z=Az$, with $z_i=g(y_i)$. Full column rank of $A$ identifies all $z_i$, hence all $y_i$. As an idealised example, the collection of all size-$n$ subsets has full column rank for $1\leq n<N$. A stability bound and proof appear in Appendix~\ref{app:quasi}. Crucially, every subset needs its own valid aggregate: copying a parent population's mean onto subsets does not provide these observations.

For disjoint biological populations, a shared restricted function class offers a different route. If $P_j$ is the input distribution for population $j$, mean supervision identifies $f$ within a class $\mathcal F$ precisely when equal values of $\int f\,dP_j$ separate its admissible functions on the observed supports. For fixed linear features $f_\beta(x)=\phi(x)^\top\beta$, full column rank of the matrix with rows $\mathbb E_{P_j}[\phi(X)]^\top$ suffices; its smallest singular value governs stability (Appendix~\ref{app:separation}). These results formalise the value of population diversity without equating it with more repeated crops. These rank conditions are sufficient conditions for identification and do not follow from weight sharing, augmentation or sample count alone.

\subsection{Subsampling is an optimisation design, not new supervision}
Stochastic subsampling exposes different cells and reduces computation, but a random subset need not have its parent's exact mean or dispersion. For fixed predictions $v_1,\ldots,v_N$, a uniformly sampled subset of $m$ distinct objects and a fixed parent target $\mu_0$,
\begin{equation}\label{eq:subset-main}
\mathbb E_S[(\bar v_S-\mu_0)^2]
=(\bar v-\mu_0)^2+\frac{N-m}{m(N-1)}\sigma_v^2,
\end{equation}
where $\sigma_v^2=N^{-1}\sum_i(v_i-\bar v)^2$ is the variance of the full set of predictions. Thus matching subset means adds a dispersion penalty. Non-zero RDW supervision can oppose collapse, but the subset size and relative losses determine the trade-off.

Appendix~\ref{app:subsampling} derives this identity and illustrates its effect by exact enumeration. The result assumes fixed pointwise predictions; training-mode batch normalisation and learned normalised weights introduce additional dependence. Stochastic image augmentation likewise preserves the population target only when the augmented image represents the same physical quantity.

\subsection{Development data and aggregate performance}
The development dataset contains 1,105 imaging acquisitions from 390 unique collected blood specimens across six optical devices. Repeated acquisitions of the same specimen on different devices used separate preparations. Each acquisition yielded 140 RGB fields at $5{,}440\times3{,}648$ pixels; a central region was retained before segmentation into $100\times100$ masked crops. Samples were wet preparations rather than dried smears.

The filed specification reports Pearson correlations against a Sysmex analyser of 0.94--0.96 for MCV, 0.86--0.92 for RDW and 0.94--0.98 for MCH\cite{patent2026}. Table~\ref{tab:correlations} summarises these reported development ranges.

\begin{table}[htbp]
\centering
\caption{Reported Pearson correlations against a Sysmex analyser in the development data\cite{patent2026}.}\label{tab:correlations}
\begin{tabular}{lcc}
\toprule
Index & Reported Pearson correlation & Reference\\
\midrule
MCV & 0.94--0.96 & Sysmex analyser\\
RDW & 0.86--0.92 & Sysmex analyser\\
MCH & 0.94--0.98 & Sysmex analyser\\
\bottomrule
\end{tabular}
\end{table}

Correlation does not establish measurement agreement\cite{bland1986}. Method comparison is more fully characterised by paired errors, bias, limits of agreement and uncertainty while preserving specimen-level dependence. Shared specimen identities across devices also require grouped analysis, and individual-cell accuracy remains distinct from aggregate agreement.

\subsection{Physical interpretation and learned weighting remain distinct}
Even exact volumes do not identify radius, height and scaling separately: $r_i\mapsto a_i r_i$ and $h_i\mapsto h_i/a_i^2$ preserve equation~\eqref{eq:decoder} for every positive $a_i$. Unrestricted positive volume and mass functions can be written with $r=h=1$, $s=V/\pi$ and $\lambda=H/V$. A finite architecture may benefit from factorisation, but the product alone is not an identifying constraint. Appendix~\ref{app:geometry} discusses this freedom and an area-anchored alternative.

Weighted aggregation changes the represented population. For a nonnegative weight $W$ with positive expectation,
\begin{equation}\label{eq:weight-shift}
\frac{\mathbb E[WV]}{\mathbb E[W]}-\mathbb E[V]
=\frac{\operatorname{Cov}(W,V)}{\mathbb E[W]}.
\end{equation}
Weights correlated with genuine cell size can shift MCV and alter dispersion. Quality annotations and morphology-stratified analyses can distinguish attenuation of imaging artefacts from systematic downweighting of genuine abnormal cells (Appendix~\ref{app:weighting}).

\section{Discussion}
The central design choice is to keep prediction local while making supervision a property of the population. The same map is evaluated repeatedly across samples, and its outputs are linked by an explicit measurement operator. This preserves instance-level quantities for inspection and reuse rather than hiding them inside an unconstrained bag regressor. In the red-cell implementation, the model uses 2D appearance to infer volumetric and haemoglobin-related quantities whose joint mean and dispersion are tied to measurable specimen indices. The 3D claim is therefore about volumetric biophysics, not unique geometric reconstruction.

The theory motivates evaluation across distinct biological distributions rather than repeated measurements of the same specimen alone. Comparisons with a direct scalar instance regressor and a matched bag-level baseline can isolate the contribution of structural factorisation, while ablations can separate the effects of joint-statistic supervision, weighting, calibration and subset size.

Population-level agreement, individual-property recovery and geometric reconstruction require progressively different evidence. A matched-cell reference subset, for example using fluorescence exclusion with paired brightfield imaging\cite{roussel2018}, provides a route to assessing instance estimates while retaining image-only deployment. Unpaired histogram agreement cannot establish which volume belongs to which image, and recovery of geometric dimensions requires additional anchors beyond a volume reference.

External-laboratory and device transfer are naturally distinguished by held-out biological specimens and explicit calibration boundaries; labelled calibration on a new device constitutes adaptation rather than zero-shot transfer. The behaviour of learned weighting can likewise be characterised separately for imaging artefacts and genuine morphological extremes. Together, the empirical results and mathematical analysis define a population-supervised route from 2D microscopy to 3D cellular biophysical quantities while making explicit the assumptions required for cell-level and geometric interpretation.

\section{Conclusions}

We introduced a population-supervised framework for learning 3D
biophysical properties of individual cells from 2D microscopy when
reference measurements are available only as statistics of cell
populations. In the red-cell setting, the approach combines shared
single-cell inference with a biophysically structured representation
of volume and haemoglobin, learned instance weighting, and aggregation
to clinically established indices including MCV, RDW-CV and MCH. The
results demonstrate that population-level measurements can provide
sufficient supervision for learning useful cellular biophysical
quantities without requiring paired reference measurements for every
individual cell.

The mathematical analysis clarifies both why this is possible and
where its limits lie. Population statistics impose joint constraints
on a shared instance-level predictor, so sufficiently informative
variation across populations can identify that predictor within a
restricted function class. Conversely, agreement with aggregate
statistics alone does not imply unrestricted recovery of individual
values: mean and dispersion constraints leave non-identifiable
solutions in the absence of additional structure, and repeated random
subsets carrying the same population label do not create independent
information. Likewise, volumetric inference from 2D appearance should
not be confused with unique reconstruction of 3D cellular geometry.
These distinctions separate population-level predictive agreement,
instance-level identifiability and geometric identifiability, and
specify the additional assumptions or measurements required to move
from one level to the next.

The resulting picture is therefore more general than the particular
red-cell application. Measurements need not be paired with the
individual entities whose properties are to be learned. What is
required is a shared mapping, an explicit relation between latent
individual quantities and observed population statistics, and a
sufficiently informative family of populations to distinguish
admissible mappings. Under these conditions, information distributed
across populations can identify properties at the level of their
constituents. The same principle applies recursively whenever
observations and latent variables are organised at multiple levels.

More generally, these results establish a principle of
\emph{hierarchical learning from joint distributions}: supervision
available at one level of organisation can identify representations
and predictions at another when the joint observations provide
sufficient independent constraints. Direct labels for every
individual object, property or task are therefore not necessary.
Multiple jointly observed statistics and their dependencies can
constrain latent structure across levels, allowing information learned
from populations to determine properties of their constituents and,
recursively, information at one hierarchical level to constrain
another. This extends beyond cellular biophysics to a general
framework for learning from indirect and distributed supervision.
Under the identifiability conditions developed here, the same
principle provides a formal basis for zero-shot inference of
unlabelled properties and for learning beyond the specific observables
directly supplied during training.

\section{Methods}
\subsection{Study units, acquisition and preprocessing}
A biological specimen supplies reference analyser values and may contribute multiple independently prepared imaging acquisitions. Each acquisition is a bag of segmented objects; individual crops are not independent biological replicates. The dataset comprises 390 unique specimens, 1,105 acquisitions and six optical devices.

Each wet-preparation acquisition produced 140 RGB images at $5{,}440\times3{,}648$ pixels. The inner 50\% of each field was retained to reduce peripheral defocus. Cellpose version 3.0.8 identified individual objects\cite{stringer2021,stringer2025}, which were represented as $100\times100$ masked RGB crops, yielding on the order of tens of thousands of crops per preparation.

\subsection{Population targets and calibrated measurement operator}
For acquisition $j$, write $\chi_j=\{(x_{ji},d_{ji})\}_{i=1}^{N_j}$, with $x_{ji}\in\mathbb R^{3\times100\times100}$ and known device index $d_{ji}$. An ordinary acquisition has one device; the per-object notation also covers explicitly constructed mixed bags. Its target is $\xi_j=(\mu_{V,j},\mathrm{CV}_{V,j},\mu_{H,j})$: MCV in fL, RDW-CV in percent and MCH in pg. In the ideal population model,
\begin{equation}\label{eq:population-indices}
\mathrm{MCV}=\mathbb E[V],\qquad
\mathrm{RDW\mbox{-}CV}=100\frac{\sqrt{\operatorname{Var}(V)}}{\mathbb E[V]},\qquad
\mathrm{MCH}=\mathbb E[H].
\end{equation}
Throughout this work, the model output for RDW is the coefficient of variation, RDW-CV. RDW-SD is a different quantity and is not interchangeable with RDW-CV.

\subsection{Encoder, scalar heads and pointwise interpretation}
The image network has four convolutional blocks (Table~\ref{tab:architecture}). The first three contain two convolution--batch-normalisation--ReLU sequences, with 32, 64 and 128 channels and max-pooling factors two, two and five. The fourth has 256 channels and adaptive average pooling. A linear layer maps the resulting 256-vector to an embedding of dimension $D$. Four $D\to64\to1$ heads with an intermediate ReLU produce $(r,h,s,\lambda)$. A separate CNN with the same architecture and a scalar head produces the instance weight.

\begin{table}[htbp]
\centering
\small
\caption{Convolutional encoder and scalar-head architecture. $B$ denotes the crop minibatch size and $D$ the embedding dimension.}\label{tab:architecture}
\begin{tabular}{@{}p{0.14\linewidth}p{0.51\linewidth}p{0.27\linewidth}@{}}
\toprule
Stage & Operations & Output shape\\
\midrule
Input & Masked RGB crop & $B\times3\times100\times100$\\
Block 1 & Two $3\times3$ convolutions, BN, ReLU; pool 2 & $B\times32\times50\times50$\\
Block 2 & Two $3\times3$ convolutions, BN, ReLU; pool 2 & $B\times64\times25\times25$\\
Block 3 & Two $3\times3$ convolutions, BN, ReLU; pool 5 & $B\times128\times5\times5$\\
Block 4 & $3\times3$ convolution, BN, ReLU; adaptive pool & $B\times256\times1\times1$\\
Embedding & Flatten; linear $256\to D$; ReLU & $B\times D$\\
Each head & Linear $D\to64$; ReLU; linear $64\to1$ & $B\times1$\\
\bottomrule
\end{tabular}
\end{table}

The weighted-moment formulation assumes positive calibrated volumes and nonnegative instance weights with positive total. The pointwise analysis applies to the deterministic per-image mapping at inference; batch normalisation during training can introduce minibatch dependence and is therefore treated separately in the subset analysis.

\subsection{Structural decoder and device calibration}
After equation~\eqref{eq:decoder}, device-specific affine maps place the raw volume and haemoglobin outputs on the reference scales:
\begin{equation}\label{eq:calibration}
V_{ji}=a_{d_{ji}}\widetilde V_{ji}+b_{d_{ji}},\qquad
H_{ji}=c_{d_{ji}}\widetilde H_{ji}+e_{d_{ji}}.
\end{equation}
These linear transformations convert the raw outputs to the reference units (fL for volume and pg for haemoglobin content) and account for device-specific scale and offset while preserving a shared encoder and decoder. The calibrated outputs are then used to compute the population statistics below.

\subsection{Weighted moments and training objective}
Let $w_{ji}\geq0$ be the learned object weight and $\alpha_{ji}=w_{ji}/\sum_l w_{jl}$. The implemented aggregation is
\begin{align}
\widehat\mu_{V,j}&=\sum_i\alpha_{ji}V_{ji},&
\widehat q_{V,j}&=\sum_i\alpha_{ji}(V_{ji}-\widehat\mu_{V,j})^2,\label{eq:weighted-moments}\\
\widehat\xi_j&=\left(\widehat\mu_{V,j},\,
100\frac{\sqrt{\widehat q_{V,j}}}{\widehat\mu_{V,j}},\,
\sum_i\alpha_{ji}H_{ji}\right).&&\label{eq:predicted-target}
\end{align}
Here $\widehat q_{V,j}$ is the weighted second central moment of the predicted cell volumes. The observation operator of the full model includes learned weighting, unlike the fixed unweighted $S$ used in the elementary identification results.

The three population indices are optimised jointly using mean squared error:
\begin{equation}\label{eq:task-loss}
\mathcal L_{\mathrm{MSE}}=\frac{1}{3M}\sum_{j=1}^{M}\sum_{k=1}^{3}
\left(\widehat\xi_{jk}-\xi_{jk}\right)^2.
\end{equation}

\section{Competing Interests}
The work is associated with Oxford Immune Algorithmics and a filed patent application\cite{patent2026}.

\begingroup
\bibliographystyle{unsrt}
\bibliography{references}
\endgroup

\clearpage
\section*{Appendix}
\appendix
\numberwithin{equation}{section}
\numberwithin{figure}{section}
\numberwithin{table}{section}
\section{What is observed and what is identified?}\label{app:ambiguity}
Let $P_j$ be an image distribution in population $j$. For an ideal deterministic property map $f_0$, suppose its population mean $\mu_j=\mathbb E_{P_j}[f_0(X)]$ is known. A predictor $f$ matches all observed means when $\mathbb E_{P_j}[f(X)]=\mu_j$. In real data, the target is measured with error and the image bag is only a sample from $P_j$; these are additional sources of uncertainty, not assumptions removed by the architecture.

There are three distinct identification problems. First, prediction of an observable population functional, such as mean volume. Second, recovery of the instance map $x\mapsto V(x)$ or $x\mapsto H(x)$. Third, recovery of latent geometric coordinates from those physical properties. The third problem may remain non-identifiable even if the second is solved exactly. Likewise, a model can correctly predict observable population quantities without solving the second problem.

\paragraph{Proposition A.1: the moment fibre.}
For $N\geq3$, fix $\mu\in\mathbb R$ and $q>0$. The set
\[
\mathcal M=\left\{v\in\mathbb R^N:
\frac1N\sum_i v_i=\mu,\quad
\frac1N\sum_i(v_i-\mu)^2=q\right\}
\]
is a sphere of dimension $N-2$ within the mean-constrained hyperplane. Positivity constraints restrict this sphere; at any strictly positive point they do not eliminate its local dimension.

\paragraph{Proof.}
Write $v=\mu\mathbf1+z$. The constraints become $\mathbf1^\top z=0$ and $\|z\|_2^2=Nq$. The first defines an $(N-1)$-dimensional linear subspace and the second a positive-radius sphere in that subspace. At a strictly positive feasible $v$, a sufficiently small neighbourhood preserves positivity. Hence, where such a point exists, local ambiguity persists. $\square$

This statement concerns prediction vectors, not necessarily the representational capacity of a particular finite neural network. It shows that the aggregate observations alone are insufficient. An expressive predictor can realise many assignments on a finite set of distinct training images; a restricted shared predictor might eliminate some of them. Establishing that restriction is part of an identification argument, not a consequence of the number of crops.

Permutations illustrate another boundary. With equal weights, permuting pairs $(V_i,H_i)$ within a bag preserves all moments of their joint empirical distribution, not only MCV, RDW and MCH. Therefore even access to a complete unpaired distribution would not, by itself, identify the image-to-property assignment. With learned weights, jointly permuting $(V_i,H_i,w_i)$ also preserves the aggregate. Independently measured correspondences or informative restrictions across populations are needed.

A non-zero RDW-CV target rules out constant positive volume predictions within that bag, because any such prediction has zero weighted variance whenever the weights have a positive sum. This exclusion is useful but much weaker than recovery of true volumes. MCH adds a mass mean; a free concentration head can still compensate for a volume prediction through $\lambda_i=H_i/V_i$.

For a concrete diagnostic, take $V=(70,80,100,110)$ fL and the reverse assignment $\widehat V=(110,100,80,70)$, with $H=0.34V$ and $\widehat H=0.34\widehat V$ in compatible units. Both give MCV $90$ fL, variance $250\,\mathrm{fL}^2$, RDW-CV approximately $17.5682\%$ and MCH $30.6$ pg. Their cell-paired volume RMSE is $\sqrt{1000}\approx31.6228$ fL. This construction isolates the assignment ambiguity left by aggregate agreement.

\section{Quasi-arithmetic means and overlapping populations}\label{app:quasi}
We formalise identification under quasi-arithmetic mean supervision for overlapping fixed-size populations. The analysis concerns labelled subsets of a fixed universe, not random subsets assigned an unchanged parent label.

Let $I\subset\mathbb R$ be an interval and $g:I\to\mathbb R$ continuous and strictly monotone. A quasi-arithmetic mean of $y_1,\ldots,y_n\in I$ is
\begin{equation}\label{eq:quasi-definition}
S_g(y_1,\ldots,y_n)=g^{-1}\!\left(\frac1n\sum_{i=1}^{n}g(y_i)\right).
\end{equation}
The identity $g(t)=t$ gives the arithmetic mean; $g(t)=\log t$ on positive values gives the geometric mean. This construction is scalar. A vector of moments such as mean and variance is not itself assumed to be a quasi-arithmetic mean.

Consider $N$ fixed objects with labels $y=(y_1,\ldots,y_N)\in I^N$. For each bag $j$, let $m_{ji}$ be the known multiplicity of object $i$, let $n_j=\sum_i m_{ji}>0$, and set $A_{ji}=m_{ji}/n_j$. Then $A\mathbf1=\mathbf1$ and the bag label is $\xi_j=g^{-1}((Az)_j)$ for $z_i=g(y_i)$. Entries of $A$ are known observation weights, not the learned reliability weights of the RBC network.

\begin{proposition}[Identification by transformed aggregate rank]\label{prop:quasi-rank}
If $A$ has full column rank, equality of every quasi-arithmetic aggregate for $\widehat y,y\in I^N$ implies $\widehat y=y$. If $A$ has a non-trivial null space and every $y_i$ lies in the interior of $I$, there are distinct labels arbitrarily close to $y$ with the same aggregates.
\end{proposition}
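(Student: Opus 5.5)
The plan is to reduce everything to the linear system in the transformed coordinates $z_i=g(y_i)$. Since $g$ is continuous and strictly monotone on $I$, it is a bijection onto the interval $g(I)$, and $g^{-1}$ is also strictly monotone, hence injective. So equality of the aggregates $\xi_j=g^{-1}((Az)_j)$ and $\widehat\xi_j=g^{-1}((A\widehat z)_j)$ for every bag $j$ is equivalent to $(Az)_j=(A\widehat z)_j$ for every $j$, that is, $A(\widehat z-z)=0$. This is the equivalence already stated before equation~\eqref{eq:quasi-main}.

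For the first claim, full column rank of $A$ means its null space is trivial. Hence $\widehat z=z$, and injectivity of $g$ gives $\widehat y_i=g^{-1}(\widehat z_i)=g^{-1}(z_i)=y_i$ for each $i$.

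For the second claim, I would pick a nonzero $u\in\ker A$ and set $\widehat z(t)=z+tu$ for small $t\neq0$. Each $y_i$ lies in the interior of $I$, and a strictly monotone continuous map sends interior points to interior points of $g(I)$. So each $z_i$ has a neighbourhood $(z_i-\epsilon_i,z_i+\epsilon_i)\subset g(I)$. Take $|t|<\min_i \epsilon_i/(|u_i|+1)$; then every $\widehat z_i(t)$ lies in $g(I)$, and $\widehat y(t)=g^{-1}(\widehat z(t))$ is well defined in $I^N$. Since $A\widehat z(t)=Az$, all aggregates coincide. Injectivity of $g^{-1}$, together with $tu\neq0$, gives $\widehat y(t)\neq y$. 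Continuity of $g^{-1}$ gives $\widehat y(t)\to y$ as $t\to0$, which is the claim of distinct labels arbitrarily close to $y$.

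The only step needing care is the interior argument: the image $g(I)$ must contain a neighbourhood of each $z_i$, so that the perturbed values can be pulled back into $I$. This follows because $g$ is a continuous strictly monotone map on an interval, so it is a homeomorphism onto the interval $g(I)$ and preserves interior points. Everything else is linear algebra.
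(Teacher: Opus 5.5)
Your proposal is correct and follows the paper's proof essentially step for step. Both reduce aggregate equality to $A(\widehat z-z)=0$ via injectivity, and both perturb $z$ along a non-zero null vector using the fact that interior points of $I$ map to interior points of $g(I)$. You additionally make the step size explicit and invoke continuity of $g^{-1}$ to justify ``arbitrarily close,'' which the paper leaves implicit.
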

\begin{proof}
Strict monotonicity makes $g$ injective, so aggregate equality is equivalent to $A(\widehat z-z)=0$. Full column rank forces $\widehat z=z$ and therefore $\widehat y=y$. Conversely, take a non-zero $u\in\ker A$. Because $g$ is continuous and strictly monotone, each $g(y_i)$ is an interior point of the interval $g(I)$. For sufficiently small non-zero $t$, all coordinates of $z+tu$ remain in $g(I)$. Applying $g^{-1}$ coordinatewise gives distinct admissible labels with the same aggregate vector.
\end{proof}

\begin{corollary}[Complete fixed-size subset observations]\label{cor:all-subsets}
For $1\leq n<N$, let $A$ contain the normalised incidence rows of all size-$n$ subsets of $\{1,\ldots,N\}$. Then $A$ has full column rank. Thus genuinely labelled populations of a fixed size greater than one can identify individual labels without singleton labels. The statement for $n>1$ requires $N\geq3$.
\end{corollary}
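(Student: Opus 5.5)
The plan is to show that the only vector annihilated by the incidence matrix is zero, and then invoke Proposition~\ref{prop:quasi-rank}. Each row of $A$ equals $n^{-1}\mathbf 1_T^\top$ for a size-$n$ subset $T$, and rescaling rows does not change the kernel. So suppose $u\in\mathbb R^N$ satisfies $\sum_{i\in T}u_i=0$ for every $T\subseteq\{1,\ldots,N\}$ with $|T|=n$. We want to show $u=0$.

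The key step is an exchange argument. Fix two distinct indices $a,b$. Because $n<N$, we can choose a set $R$ of $n-1$ indices that avoids both $a$ and $b$; this needs $n-1\le N-2$, which is exactly $n\le N-1$. Subtracting the constraints for $T_1=R\cup\{a\}$ and $T_2=R\cup\{b\}$ gives $u_a=u_b$, so $u=c\mathbf 1$ for some scalar $c$. Any single row constraint then reads $nc=0$. Since $n\ge1$, this forces $c=0$ and hence $u=0$. Thus $\ker A=\{0\}$, so $A$ has full column rank, and Proposition~\ref{prop:quasi-rank} gives identification of all $y_i$ from the aggregates of the size-$n$ subsets alone.

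For the remark about $N\ge3$: when $n>1$ the hypothesis $n<N$ gives $N\ge n+1\ge3$ automatically. This is also the setting in which the conclusion ``without singleton labels'' has content. For $N=2$ the only subsets of size $1<n<2$ do not exist, and the lone size-$2$ subset gives a rank-one $A$, which is exactly the case the hypothesis excludes.

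The only delicate point is the exchange step, specifically the choice of $R$, since this is where the strict inequality $n<N$ is used. Everything else is linear algebra. As an alternative, one could compute $A^\top A$, which is proportional to $\binom{N-2}{n-2}J+\left[\binom{N-1}{n-1}-\binom{N-2}{n-2}\right]I$. Its eigenvalues are positive precisely when $n<N$, and this route would also feed the stability bound in Appendix~\ref{app:quasi}. The exchange argument is shorter and suffices for the rank statement.
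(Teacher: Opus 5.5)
Your proof is correct and is essentially the paper's own argument. You use the same exchange step: an $(n-1)$-set avoiding both indices, which exists because $n\le N-1$, forces $u$ to be constant, and a single size-$n$ row constraint then forces the constant to be zero. Your remarks on $N\ge 3$ and the $A^\top A$ spectrum also match the paper's surrounding discussion, including the smallest eigenvalue $\binom{N-2}{n-1}/n^2$.
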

\begin{proof}
Suppose $Au=0$. For any distinct $i,k$, choose a set $T$ of $n-1$ objects outside $\{i,k\}$, which is possible because $n\leq N-1$. The zero sums on $T\cup\{i\}$ and $T\cup\{k\}$ give $u_i=u_k$. Therefore $u$ is constant. Its sum on a size-$n$ bag is zero, so the constant is zero. When $n=N$, by contrast, only the full-set mean is observed and the matrix has rank one for $N>1$.
\end{proof}

The complete family is sufficient, not necessary. A smaller collection of bags can suffice if its rows have full column rank. For the complete fixed-size family, the diagonal entries of $A^\top A$ are $\binom{N-1}{n-1}/n^2$ and its off-diagonal entries are $\binom{N-2}{n-2}/n^2$, with a binomial coefficient of negative lower index interpreted as zero. Hence
\begin{equation}\label{eq:incidence-spectrum}
\sigma_{\min}(A)^2=\frac{\binom{N-2}{n-1}}{n^2},\qquad 1\leq n<N.
\end{equation}
This expression follows from the eigenvalues of a matrix with constant diagonal and off-diagonal entries. It is for the explicitly unrescaled matrix of all subset rows; duplicating or rescaling observations changes singular values and the corresponding error scale together.

\begin{proposition}[Stability of quasi-arithmetic recovery]\label{prop:quasi-stability}
Assume $A$ has full column rank, the transformed observations are $\widetilde b=Az+e$, and $g^{-1}$ is $K_g$-Lipschitz on an interval containing all coordinates of $z$ and $\widehat z$. Then
\begin{equation}\label{eq:quasi-stability}
\|\widehat y-y\|_2\leq
\frac{K_g}{\sigma_{\min}(A)}
\left(\|A\widehat z-\widetilde b\|_2+\|e\|_2\right).
\end{equation}
A sufficient condition is differentiability with $|g'|\geq c>0$ on the relevant interval, giving $K_g\leq1/c$.
\end{proposition}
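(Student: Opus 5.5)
The argument works in the transformed coordinates, where the problem is linear, and then transfers back through $g^{-1}$. The estimate splits into three pieces: a lower singular-value bound in $z$-space, a triangle inequality separating the fit residual from the noise, and a coordinatewise Lipschitz step.

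First, the singular-value bound. Because $A$ has full column rank, $\sigma_{\min}(A)>0$ and
\[
\|Au\|_2\geq\sigma_{\min}(A)\|u\|_2\quad\text{for all } u\in\mathbb R^N.
\]
This follows from the variational characterisation $\sigma_{\min}(A)^2=\min_{\|u\|=1}u^\top A^\top Au$. Applying it with $u=\widehat z-z$ gives $\|\widehat z-z\|_2\leq\|A\widehat z-Az\|_2/\sigma_{\min}(A)$.

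Next, the residual decomposition. Write $A\widehat z-Az=(A\widehat z-\widetilde b)+(\widetilde b-Az)=(A\widehat z-\widetilde b)+e$. The triangle inequality then gives
\[
\|\widehat z-z\|_2\leq\frac{1}{\sigma_{\min}(A)}\left(\|A\widehat z-\widetilde b\|_2+\|e\|_2\right).
\]
Here $\widehat z$ is arbitrary; it could be a least-squares fit, but nothing requires that.

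Third, return to labels. We have $y_i=g^{-1}(z_i)$ and $\widehat y_i=g^{-1}(\widehat z_i)$. All coordinates lie in an interval where $g^{-1}$ is $K_g$-Lipschitz, so $|\widehat y_i-y_i|\leq K_g|\widehat z_i-z_i|$ for each $i$. Squaring and summing gives $\|\widehat y-y\|_2\leq K_g\|\widehat z-z\|_2$, and combining this with the previous display yields \eqref{eq:quasi-stability}.

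For the sufficient condition, suppose $g$ is differentiable with $|g'|\geq c>0$ on the relevant interval in $y$-space. Then $g^{-1}$ is differentiable on the image interval, and the inverse function theorem in one variable gives $(g^{-1})'(t)=1/g'(g^{-1}(t))$, so $|(g^{-1})'|\leq 1/c$. The mean value theorem then gives $K_g\leq 1/c$. For this to be valid, the mean value theorem must be applied on an interval, so the image of an interval under a continuous monotone map must be an interval, which holds.

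There is no serious obstacle. The only delicate point is bookkeeping about domains. The Lipschitz hypothesis is stated on an interval containing all coordinates of $z$ and $\widehat z$, so $\widehat z_i$ must lie in $g(I)$ for $\widehat y_i$ to be defined; I will state this as implicit in the hypothesis. The derivative bound must likewise be imposed on the preimage interval $g^{-1}(J)$, where $J$ is that interval.
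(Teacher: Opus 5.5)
Your proposal is correct and follows the paper's proof. Both use the identity $A(\widehat z-z)=(A\widehat z-\widetilde b)+e$, the bound $\|Au\|_2\geq\sigma_{\min}(A)\|u\|_2$, the triangle inequality and the coordinatewise Lipschitz step for $g^{-1}$; your mean-value argument for $K_g\leq 1/c$ and your domain bookkeeping fill in details the paper leaves implicit.
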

\begin{proof}
Use $A(\widehat z-z)=A\widehat z-\widetilde b+e$, the singular-value inequality and the triangle inequality. Applying the coordinatewise Lipschitz bound for $g^{-1}$ yields the stated result.
\end{proof}

\paragraph{Interpretation of the rank condition.}
The result assumes that each aggregate label corresponds to the actual membership of its bag. Reassigning a single population mean to all sampled subsets imposes a different system: for all fixed-size subsets it forces constant transformed predictions rather than the original heterogeneous labels. Disjoint donor bags also lack the object overlap used by this theorem; in that setting, identification depends on restrictions on a shared input-to-property map, as in Appendix~\ref{app:separation}. Learning the observation weights jointly likewise changes the inverse problem. The theorem therefore applies only when the observed bags and aggregate labels satisfy these membership assumptions; the learned weighted model defines a different observation operator.

\section{A population-separation condition and stability bound}\label{app:separation}
\paragraph{Proposition C.1: mean identification in a restricted class.}
Let $\mathcal F$ be a predictor class of functions integrable under every $P_j$. Define the observation operator $T f=(\mathbb E_{P_1}f,\ldots,\mathbb E_{P_J}f)$. If the only functions in $\mathcal F-\mathcal F$ annihilated by $T$ are zero almost surely on the observed population supports, then $Tf=Tf_0$ with $f,f_0\in\mathcal F$ implies $f=f_0$ on those supports.

\paragraph{Proof.}
Set $h=f-f_0$. Equality of observations gives $Th=0$. Apply the stated null-space condition. Conversely, any non-zero $h=f-g\in\mathcal F-\mathcal F$ with $Th=0$ produces indistinguishable predictors $f$ and $g$. $\square$

Thus the separation condition is necessary and sufficient for identification within the specified class, under exact population-mean observations. It does not guarantee generalisation outside the observed support, finite-sample consistency, correct specification or a successful neural-network optimiser.

\paragraph{Corollary C.2: fixed linear features.}
Let $f_\beta(x)=\phi(x)^\top\beta$, where $\phi(x)\in\mathbb R^p$ is fixed, and let $A_{j,:}=\mathbb E_{P_j}[\phi(X)]^\top$. If $A$ has full column rank, exact means identify $\beta$. With noisy observations $y=A\beta_0+\epsilon$, every estimate $\widehat\beta$ satisfies
\[
\|\widehat\beta-\beta_0\|_2
\leq\frac{\|A\widehat\beta-y\|_2+\|\epsilon\|_2}
{\sigma_{\min}(A)}.
\]

\paragraph{Proof.}
Full column rank gives $\|Au\|_2\geq\sigma_{\min}(A)\|u\|_2$. Set $u=\widehat\beta-\beta_0$ and use $Au=A\widehat\beta-y+\epsilon$ and the triangle inequality. $\square$

If an estimated design matrix $\widehat A$ is used instead, the numerator becomes
\[
\|\widehat A\widehat\beta-y\|_2+\|\epsilon\|_2
+\|A-\widehat A\|_{\mathrm{op}}\|\widehat\beta\|_2,
\]
using the true $\sigma_{\min}(A)$ in the denominator. This separates aggregate-target error from uncertainty in sampled feature means; instance accuracy additionally depends on the approximation and sampling terms entering the bound.

For illustration, $A=\left(\begin{smallmatrix}1&-1\\1&1\end{smallmatrix}\right)$ identifies $\beta=(90,10)^\top$ from means $(80,100)^\top$. Replacing both rows by $(1,0)$ destroys slope identification regardless of the number of images used to estimate those rows. This synthetic example illustrates the role of rank independently of the image encoder.

\section{Subsampling and dispersion}\label{app:subsampling}
\paragraph{Proposition D.1: finite-population subset mean loss.}
Let $N>1$ and let $S$ be a simple random sample of $m$ distinct indices from $\{1,\ldots,N\}$. For fixed values $v_i$, define
\[
\bar v=\frac1N\sum_i v_i,
\qquad
\sigma_v^2=\frac1N\sum_i(v_i-\bar v)^2.
\]
For any fixed target $\mu_0$,
\begin{equation}\label{eq:subset-appendix}
\mathbb E_S[(\bar v_S-\mu_0)^2]
=(\bar v-\mu_0)^2+
\frac{N-m}{m(N-1)}\sigma_v^2.
\end{equation}

\paragraph{Proof.}
Let $I_i$ indicate inclusion. Then $\mathbb E I_i=m/N$ and $\mathbb E(I_iI_k)=m(m-1)/(N(N-1))$ for distinct $i,k$. Hence $\mathbb E\bar v_S=\bar v$, and direct expansion gives
\[
\operatorname{Var}(\bar v_S)=\frac{N-m}{m(N-1)}\sigma_v^2.
\]
The result follows from the decomposition of mean squared error into squared bias and variance. $\square$

For independent draws from an image distribution, the corresponding identity is
\[
\mathbb E[(\bar f_m-\mu_0)^2]
=(\mathbb Ef-\mu_0)^2+\frac{\operatorname{Var}(f)}{m}.
\]
Both identities concern fixed pointwise predictions. Batch-dependent transformations or learned normalised weights introduce additional dependence and therefore change the sampling calculation.

Reference-analyser error is separate from subset sampling: treating a measured analyser value as the target defines error relative to that measurement, whereas repeated reference measurements characterise uncertainty relative to the underlying biological quantity.

\paragraph{Exact illustration of dispersion pressure.}
For the synthetic four-cell population $v=(70,80,100,110)$ fL, let $v_i(a)=90+a(v_i-90)$. The mean remains $90$ fL while the variance is $250a^2$ fL$^2$. Enumerating all six size-two subsets gives
\[
\mathbb E_S[(\bar v_S-90)^2]=\frac{250}{3}a^2.
\]
The subset-mean term is therefore minimised at $a=0$, even though $a=1$ reproduces the generating dispersion. Joint RDW supervision supplies an opposing constraint on dispersion.
\begin{figure}[htbp]
\centering
\includegraphics[width=0.76\linewidth]{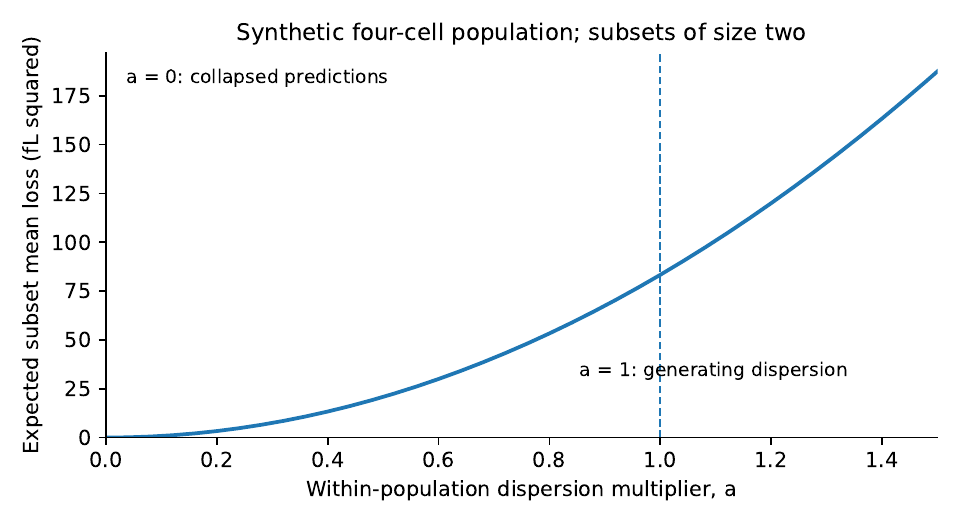}
\caption{Synthetic subsampling diagnostic. All size-two subsets of $(70,80,100,110)$ fL are enumerated after scaling deviations about $90$ fL by $a$. Expected mean-target loss equals $(250/3)a^2$. The vertical reference marks the generating dispersion.}\label{fig:subset}
\end{figure}

\section{Geometric gauge freedom and admissibility}\label{app:geometry}
For positive $(r,h,s,\lambda)$, the raw outputs are $\widetilde V=\pi sr^2h$ and $\widetilde H=\lambda\widetilde V$. For any positive function $a(x)$,
\[
(r,h,s,\lambda)\mapsto(ar,h/a^2,s,\lambda)
\]
leaves both outputs unchanged. Another invariance is $(h,s)\mapsto(bh,s/b)$ for positive $b$. These transformations persist after device calibration because the raw outputs are identical.

For unrestricted scalar functions, every pair of positive target maps $V_0(x),H_0(x)$ has a representation $r=h=1$, $s=V_0/\pi$, $\lambda=H_0/V_0$. Therefore the algebraic product alone does not reduce the class of positive output maps. A finite-capacity architecture or regulariser may make the factorisation useful inductively, but that is an empirical question distinct from identifiability. The argument does not claim that every finite network realises every such function exactly.

One possible extension is to anchor projected area $A_i$ using calibrated image geometry, set an equivalent projected radius $r_i=\sqrt{A_i/\pi}$ and remove or fix the redundant scaling variable. The resulting $V_i=A_i t_i$ predicts an effective thickness. It is not a depth map or a claim that the true cell is a cylinder, and $t_i$ remains unvalidated without a volume reference. Tilt, segmentation bias and optical scale remain relevant sources of error. Bounds on thickness or concentration are physically meaningful only when independently supported for the intended population; narrow normal-range constraints could otherwise suppress pathological extremes.

\section{Learned weights and population shift}\label{app:weighting}
For nonnegative $W$ with $\mathbb EW>0$, finite $\mathbb E|WV|$ and finite covariance,
\[
\frac{\mathbb E[WV]}{\mathbb E[W]}
=\mathbb E[V]+\frac{\operatorname{Cov}(W,V)}{\mathbb E[W]}.
\]
This follows directly from the definition of covariance and is exact for a finite bag when expectations denote uniform averages over its entries. For the illustrative vector $(70,80,100,110)$ and weights $(1,1,1,4)$, the weighted mean is $98.5714$ rather than $90$ fL, a shift of $8.5714$ fL.

The corresponding shift in the raw second moment is $\operatorname{Cov}(W,V^2)/\mathbb E[W]$. Subtracting the square of the weighted mean shows that dispersion depends on both first- and second-moment selection; zero covariance with $V$ alone does not guarantee preservation of variance.

A quality-weighted statistic is appropriate when the target is defined for the selected population. When the intended target is the unweighted biological population, the dependence of weights on morphology or predicted phenotype determines whether weighting changes the estimand. This motivates separate analysis of imaging artefacts and genuine morphological extremes.

\section{Mixture populations and aggregate labels}\label{app:mixtures}
The architecture permits objects from different acquisitions or devices to be aggregated. This does not imply that RDW labels can be linearly interpolated. For a distributional mixture that samples population 1 with probability $\alpha$ and population 2 otherwise,
\[
\mu=\alpha\mu_1+(1-\alpha)\mu_2,
\]
\[
q=\alpha\{q_1+(\mu_1-\mu)^2\}
+(1-\alpha)\{q_2+(\mu_2-\mu)^2\}.
\]
The proof follows by conditioning on the selected component and expanding the squared deviation. The second term includes the between-population variance; omitting it underestimates dispersion. MCH mixes linearly as a mean; RDW-CV is recomputed as $100\sqrt q/\mu$, not mixed linearly.

The identities hold for specified distributions or weighted empirical measures. Drawing finite random numbers of cells from the component specimens adds sampling variability. Constructed labels from noisy analyser moments inherit that uncertainty. Learned weights that alter the realised mixture proportions require further adjustment. For training mixtures, keeping component specimens within the same data partition avoids information leakage. These identities specify the corresponding aggregate labels.

For fixed linear features, exact synthetic mixture means are linear combinations of the original mean-feature rows. If $A$ is the original design and $B$ contains mixture weights, the augmented rows $BA$ have rank at most $\operatorname{rank}(A)$. Adding $BA$ beneath $A$ cannot increase its column rank. Mixtures may aid optimisation or impose other nonlinear consistency constraints, but they do not manufacture new independent linear mean observations. This separates the role of augmentation from that of newly measured biological populations.

\section{Assumptions and scope}\label{app:scope}
The mathematical results address complementary forms of population-supervised inference. The rank theorem concerns exact aggregate labels for overlapping bags with known membership; the population-separation result concerns a restricted shared predictor across distinct population distributions; and the subsampling result quantifies the additional dispersion term introduced when a fixed parent mean is assigned to random subsets. These statements describe different observation operators and are not interchangeable.

The pointwise arguments assume a deterministic per-image map. Batch normalisation in training mode and learned normalised weights can introduce dependence across examples, while positivity and small denominators affect the numerical behaviour of the aggregate statistics. The classification and spatial-field extensions likewise depend on the specified measure: a support-average probability and a cell-count fraction coincide only under additional conditions.

Together, these results delimit when population statistics constrain local predictions and when additional measurements or structural assumptions are required for instance-level or geometric interpretation.

\end{document}